\documentclass[sigconf]{acmart}
\usepackage{caption}
\usepackage{subcaption}
\usepackage{algorithm}
\usepackage{algpseudocode}
\usepackage{tikz}
\usetikzlibrary{positioning,arrows.meta}
\usepackage{mathtools}
\DeclarePairedDelimiter\ceil{\lceil}{\rceil}
\DeclarePairedDelimiter\floor{\lfloor}{\rfloor}
\AtBeginDocument{%
  \providecommand\BibTeX{{%
    \normalfont B\kern-0.5em{\scshape i\kern-0.25em b}\kern-0.8em\TeX}}}

\settopmatter{printacmref=false}

\copyrightyear{2024}
\acmYear{2024}
\setcopyright{acmlicensed}\acmConference[GECCO '24 Companion]{Genetic and Evolutionary Computation Conference}{July 14--18, 2024}{Melbourne, VIC, Australia}
\acmBooktitle{Genetic and Evolutionary Computation Conference (GECCO '24 Companion), July 14--18, 2024, Melbourne, VIC, Australia}
\acmDOI{10.1145/3638530.3664158}
\acmISBN{979-8-4007-0495-6/24/07}



\begin{document}

\title{Analyzing the Runtime of the Gene-pool Optimal Mixing Evolutionary Algorithm (GOMEA) on the Concatenated Trap Function}

\author{Yukai Qiao}
\authornote{Both authors contributed equally to this research.}

\orcid{0000-0002-9876-4353}
\affiliation{
    \institution{School of Information Technology and Electrical Engineering\\
      University of Queensland}
      \streetaddress{1 Th{\o}rv{\"a}ld Circle}
      \city{Brisbane}
      \country{Australia}
}
\email{kai.barnes@uqconnect.com}

\author{Marcus Gallagher}
\orcid{0000-0002-6694-9572}
\affiliation{
  \institution{School of Information Technology and Electrical Engineering\\
  University of Queensland}
  \streetaddress{1 Th{\o}rv{\"a}ld Circle}
  \city{Brisbane}
  \country{Australia}}
\email{marcusg@uq.edu.au}

\renewcommand{\shortauthors}{Yukai and Marcus}

\begin{abstract}
The Gene-pool Optimal Mixing Evolutionary Algorithm (GOMEA) is a state of the art evolutionary algorithm that leverages linkage learning to efficiently exploit problem structure. By identifying and preserving important building blocks during variation, GOMEA has shown promising performance on various optimization problems. In this paper, we provide the first  runtime analysis of GOMEA on the concatenated trap function, a challenging benchmark problem that consists of multiple deceptive subfunctions. We derived an upper bound on the expected runtime of GOMEA with a truthful linkage model, showing that it can solve the problem in $O(m^{3}2^k)$ with high probability, where $m$ is the number of subfunctions and $k$ is the subfunction length. This is a significant speedup compared to the (1+1) EA, which requires $O(ln{(m)}(mk)^{k})$
expected evaluations. 

\end{abstract}

\begin{CCSXML}
<ccs2012>
   <concept>
       <concept_id>10003752.10003809.10003716.10011136.10011797.10011799</concept_id>
       <concept_desc>Theory of computation~Evolutionary algorithms</concept_desc>
       <concept_significance>500</concept_significance>
       </concept>
   <concept>
       <concept_id>10003752.10010070.10011796</concept_id>
       <concept_desc>Theory of computation~Theory of randomized search heuristics</concept_desc>
       <concept_significance>500</concept_significance>
       </concept>
 </ccs2012>
\end{CCSXML}

\ccsdesc[500]{Theory of computation~Evolutionary algorithms}
\ccsdesc[500]{Theory of computation~Theory of randomized search heuristics}

\ccsdesc[500]{Theory of computation~Evolutionary algorithms}
\ccsdesc[500]{Computing methodologies~Neural networks}

\keywords{evolutionary algorithm, recombination, runtime analysis}

\maketitle

\section{Introduction}

Evolutionary Algorithms (EAs) are a class of optimization algorithms inspired by the principles of natural evolution. They maintain a population of candidate solutions and apply selection and variation operators to improve the solutions over generations. The performance of EAs heavily depends on the choice of variation operators and the problem structure. Recombination operators, such as crossover, play a crucial role in exploiting the structure of the problem by combining partial solutions from different individuals. However, the effectiveness of recombination is often hindered by the linkage between problem variables, i.e., the inter-dependencies between different components of the solution.

To address this issue, the Gene-pool Optimal Mixing Evolutionary Algorithm (GOMEA)\cite{thierens2011optimal} was introduced. GOMEA is a novel EA that employs a linkage model to capture the dependencies between problem variables and performs variation by mixing partial solutions according to this model. The linkage model is represented by a Family of Subsets (FOS), which is a set of subsets of problem variables that are considered to be interdependent. By using this linkage knowledge, GOMEA aims to efficiently exploit the problem structure and avoid disrupting important building blocks during variation.

In this paper, we perform the first rigorous runtime analysis of GOMEA on the concatenated trap function, a well-known benchmark problem that consists of multiple deceptive subfunctions. We derive an upper bound on the expected runtime of GOMEA with a truthful FOS, showing that it can efficiently solve the problem by recombining optimal subsolutions. We also provide an upper bound on the simple (1+1) EA, demonstrating the significant speedup achieved by GOMEA's informed recombination.

The rest of the paper is organized as follows. Section 2 provides the necessary definitions and background on the concatenated trap function, the (1+1) EA, and GOMEA. In Section 3, we present our theoretical analysis of the runtime of both algorithms on the concatenated trap function. Section 4 reports our experimental results, verifying the derived upper bound for GOMEA. Finally, Section 5 concludes the paper and discusses potential future work.

\section{Definitions}
\subsection{Concatenated Trap Function}

The concatenated trap function \cite{deb1993analyzing} \cite{thierens2010linkage} is a concatenation of $m$ of $k$-bit trap functions, each designed to have a local optimum and a global optimum. The function is defined as follows:

Let $x = (0,1)^{mk}$ be a binary string of length $mk$. The string is divided into $m$ non-overlapping substrings of equal length $k$.

The fitness of each substring $x^{(i)}$ is evaluated using the trap function $f_{trap}$:

\begin{equation}
f_{trap(k)}(x^{(i)}) = \begin{cases}
k - u(x^{(i)}), & \text{if } u(x^{(i)}) < k \\
k, & \text{otherwise}
\end{cases}
\end{equation}

where $u(x^{(i)})$ is the number of ones in the substring $x^{(i)}$.

The overall fitness of the candidate solution $x$ is the sum of the fitness values of all the substrings:

\begin{equation}
f_{MK}(x) = \sum_{i=1}^{m} f_{trap(k)}(x^{(i)})
\end{equation}

This is an additively separable function, with each subfunction being NP hard with growing $k$. For each of the trap function, a hill climber is expected to be either randomly initialized at the global optimum, or to be at the slope leading to the local optimum. Once trapped at the local optimum, the only way to get out is by performing a $k$ bits jump, which is exponentially unlikely with growing $k$.

\subsection{(1+1) Evolutionary Algorithm}

The (1+1) Evolutionary Algorithm, denoted as (1+1) EA, is a simple yet fundamental evolutionary algorithm. It maintains a population of size one and applies mutation as the sole variation operator, as described in Algorithm 1.

\subsection{Family Of Subsets}
As a form of substructure representation, a Family Of Subsets (FOS)\cite{thierens2011optimal} $\mathcal F$ is a set of subsets of $\mathcal S =\{0,1,2,..,l-1\}$ which are the indices of problem parameters with a size of $l$. Each element $\mathcal F_{i}$ is assumed to be relatively independent of the rest of the parameters $\mathcal S - \mathcal F_{i}$.

In a Marginal Product FOS, each element is mutually exclusive of the other; $\mathcal F_{i}\cap\mathcal F_{j}=\emptyset $ for any $\mathcal F_{i},\mathcal F_{j}\in\mathcal F$. In its simplest form, each subset consists of only one parameter which is the univariate FOS.

For the purpose of this paper, we only consider a separable problem that can be truthfully represented by a Marginal Product FOS. We say a MP FOS is truthful if each subset contains only inputs of a specific subfunction. For example, a truthful MP FOS for the concatenated trap function with parameter $m$ and $k$ would be: $\{\{ik, ik+1, ... ik+k-1\}|k\in[m-1]\}$.

\begin{algorithm}
\caption{(1+1) EA}
\begin{algorithmic}[1]
\State Initialize a candidate solution $x \in \{0, 1\}^n$
\While{termination criterion not met}
\State $x' \gets mutate(x, p)$ //Each bit has an independent probability p of flipping
\If{$f(x') \geq f(x)$}
\State $x \gets x'$
\EndIf
\EndWhile
\State \textbf{return} $x$
\end{algorithmic}
\end{algorithm}

\begin{algorithm}
\caption{GOMEA with random selection}\label{alg:om}
\begin{algorithmic}
\Function{GOM}{$p_{0},P, \mathcal F$}
\For{$\mathcal F_{i}$ in $\mathcal F$}
    \State $p_{1} \gets randomSelection(P-p_{0})$
    \State $p_{0}^{new} \gets crossWithMask(p_{0},p_{1},\mathcal F_{i})$
    \State $evaluateFitness(p_{0}^{new})$
    \If{$p_{0}^{new}.fitness > p_{0}.fitness$}
        \State $p_{0} \gets p_{0}^{new}$
    \EndIf
\EndFor
\State Return $p_{0}$
\EndFunction
\\
\Function{GOMEA}{$\mathcal F$}
\While{termination criterion not met}
\State $P \gets uniformInitialization$
\State $p_{0} \gets uniformSelection(P)$
\State $p_{0} \gets GOM(p_{0},P,\mathcal F)$
\EndWhile
\State Return $p^{*}$ // Return the individual with highest fitness
\EndFunction
\end{algorithmic}
\end{algorithm}

\subsection{Gene-pool Optimal Mixing EA}
In an Optimal Mixing (OM) operator the FOS is traversed in a random order and an offspring is produced by two parents swapping genetic
material using a different FOS element as a crossover mask. Specifically, the FOS element describes a subset of variables to be swapped
into the first parent from the second parent. The offspring is only
accepted if it outperforms the first parent and when it does it will
replace the first parent. A pseudo-code is provided in Algorithm.2.

Figure \ref{fig:GOM_example1} and Figure \ref{fig:GOM_example2} provide two examples of a GOM process with or without a truthful FOS on the concatenated trap function. It can be clear that with a truthful FOS, an individual will never lose an optimal sub solution on a fully separable problem, conversely, without a truthful FOS, one risks of trading an optimal sub solutions with inferior ones but with higher short term fitness gain.

\begin{figure}
\centering
\begin{tikzpicture}[node distance=2cm]
\node[text width = 5cm] (fos) {$\mathcal F=\{\{1,2,3\}, \{4,5,6\}, \{7,8,9\}\}$};
\node[below = 2pt of fos, text width = 5cm] (d11) {Donor 1: \textbf{111} 011 011 fitness = 3};
\node[below = 2pt of d11, text width = 5cm] (d21) {Donor 2: \textbf{000} 000 110 };
\node[right = 20pt of d11] {$\mathcal F_{i}=\{1,2,3\}$};
\node[below = 2pt of d21, text width = 5cm] (o1) {Offspring: \textbf{000} 010 011 fitness = 2};
\node[below right = 0pt and -75pt of o1] {discarded};

\node[below = 15pt of o1, text width = 5cm] (d12) {Donor 1: 111 \textbf{011} 011 fitness = 3};
\node[below = 2pt of d12, text width = 5cm] (d22) {Donor 3: 101 \textbf{111} 001 };
\node[right = 20pt of d12] {$\mathcal F_{i}=\{4,5,6\}$};
\node[below = 5pt of d22, text width = 5cm] (o2) {Offspring: 111 \textbf{111} 011 fitness = 6};
\node[below right = 0pt and -75pt of o2] {accepted};

\node[below = 15pt of o2, text width = 5cm] (d13) {Donor 1: 111 111 \textbf{011} fitness = 6};
\node[below = 2pt of d13, text width = 5cm] (d23) {Donor 4: 010 111 \textbf{000} };
\node[right = 20pt of d13] {$\mathcal F_{i}=\{7,8,9\}$};
\node[below right = 5pt and -150pt of d23, text width = 6cm] (o3) {Offspring: 111 111 \textbf{000} fitness = 8, accepted};

\draw[->]
    (o1) -- (d12);

\draw[->]
    (o2) -- (d13);
\end{tikzpicture}
\caption{A GOM process with a truthful MP FOS on the concatenated trap function with k=3, m=3}
\label{fig:GOM_example1}
\end{figure}

\begin{figure}
\centering
\begin{tikzpicture}[node distance=2cm]
\node[text width = 5cm] (fos) {$\mathcal F=\{\{1,2,3,4,5,6\}, \{7,8,9\}\}$};
\node[below = 2pt of fos, text width = 5cm] (d11) {Donor 1: \textbf{111 011} 011 fitness = 3};
\node[below = 2pt of d11, text width = 5cm] (d21) {Donor 2: \textbf{000 000} 110 };
\node[right = 20pt of d11] {$\mathcal F_{i}=\{1,2,3,4,5,6\}$};
\node[below = 2pt of d21, text width = 5cm] (o1) {Offspring: \textbf{000 000} 011 fitness = 4};
\node[below right = 0pt and -75pt of o1] {accepted};

\node[below = 15pt of o1, text width = 5cm] (d12) {Donor 1: 000 000 \textbf{011} fitness = 4};
\node[below = 2pt of d12, text width = 5cm] (d22) {Donor 4: 010 111 \textbf{000} };
\node[right = 20pt of d12] {$\mathcal F_{i}=\{7,8,9\}$};
\node[below right = 5pt and -150pt of d22, text width = 6cm] (o2) {Offspring: 000 000 \textbf{000} fitness = 6, accepted};

\draw[->]
    (o1) -- (d12);

\end{tikzpicture}

\caption{A GOM process with an untruthful MP FOS on the concatenated trap function with k=3, m=3}
\label{fig:GOM_example2}
\end{figure}
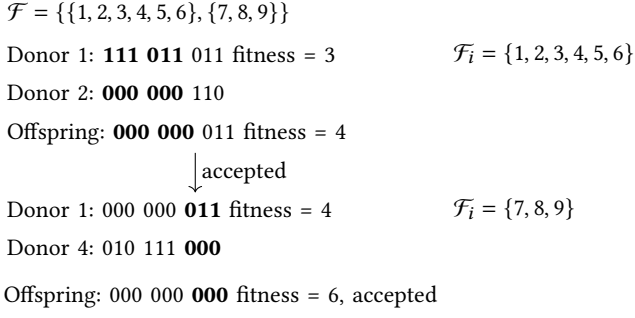

\section{Analysis}
\subsection{(1+1) EA runtime on the Concatenated Trap Function}

\begin{theorem}
\label{thm:ea_runtime}
The expected optimization time of (1+1)EA on concatenated trap function with $m$ substrings of length $k$ is  $O(\ln{m}(mk)^k)$ with a mutation rate of $\frac{1}{mk}$
\end{theorem}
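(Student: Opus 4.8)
The plan is to split the run into a short polynomial \emph{descent} phase followed by an exponential \emph{jumping} phase, and to bound the second phase by a coupon-collector argument. Because the function is additively separable, I would track the $m$ blocks individually, calling a block \emph{solved} once it equals the all-ones string (its unique global optimum) and regarding the all-zeros string as the deceptive local optimum from which only a simultaneous flip of all $k$ bits escapes. The key observation is that within one block the selection gradient points toward all-zeros for every $u<k$, so the difficulty is not in reaching a local optimum but in performing, for each block, one $k$-bit jump to all-ones.

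First I would analyze the descent phase. As long as some block is neither all-zeros nor all-ones it contains a one that can be removed, and the single-bit mutation flipping exactly that bit strictly increases that block's fitness while leaving the others untouched, hence is always accepted. This is a OneMax-type process on the at most $mk$ ``wrong'' bits, so standard arguments give that after $O(mk\log(mk))$ steps, with high probability, every block sits at all-zeros or all-ones; the expected contribution of this phase is polynomial and therefore negligible against the $(mk)^{k}$ term. I would also note that only about $m2^{-k}$ blocks are expected to be initialized at all-ones, so essentially all $m$ blocks enter the jumping phase unsolved.

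For the jumping phase I would work from the resulting \emph{locked} configuration, in which every block is all-zeros or all-ones. Here the only accepted move is a jump: a single-bit flip out of an all-zeros block is rejected (it lowers that block with no compensating gain available, since every other block is already at a local or global optimum), and an all-ones block cannot be left. For a fixed all-zeros block the event ``flip exactly its $k$ bits and no other bit'' has probability $\left(\tfrac{1}{mk}\right)^{k}\left(1-\tfrac{1}{mk}\right)^{mk-k}=\Theta\!\left((mk)^{-k}\right)$ and solves that block while being accepted. These events are disjoint across distinct blocks, so when $j$ blocks remain unsolved the probability of solving at least one in a step is at least $c\,j\,(mk)^{-k}$ for a constant $c$, and the unsolved count only decreases. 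Summing the expected waiting times gives
\[
\sum_{j=1}^{m}\frac{(mk)^{k}}{c\,j}=\frac{(mk)^{k}}{c}\,H_{m}=O\!\left((mk)^{k}\ln m\right),
\]
which is the claimed bound.

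The step I expect to be the main obstacle is controlling the cross-block coupling that the (1+1)~EA introduces through its global acceptance test: in principle a move could un-solve an all-ones block if some other block improved enough in the same step to keep the total fitness from dropping. I would rule this out precisely by the locked-configuration structure --- once the unsolved blocks are all at all-zeros there is no ``spare'' fitness anywhere to pay for reverting a solved block, so solved blocks are permanent and the coupon-collector accounting is valid. Making this rigorous (showing the descent genuinely reaches the locked state despite the coupling, and that the rare large mutations which could trade fitness between still-imperfect blocks are lower-order) is the technical heart of the argument; everything else is routine.
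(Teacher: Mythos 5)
Your proposal is correct and follows the same two-stage decomposition as the paper: a OneMax-style descent in $O(mk\ln(mk))$ expected steps until every block is at its local or global optimum, followed by an analysis of the $k$-bit jumps, with the descent cost absorbed by the jumping cost. The only genuine difference is the tool used for the second stage. The paper applies the multiplicative drift theorem to the potential $s$ counting unsolved blocks, with drift $\Delta_t(s)\ge s\left(\tfrac{1}{mk}\right)^{k}\left(1-\tfrac{1}{mk}\right)^{mk}\ge\tfrac{s}{e}(mk)^{-k}$, which gives $O\!\left(\tfrac{1+\ln m}{\delta}\right)=O(\ln m\,(mk)^{k})$ in one line; you instead run an elementary fitness-level / coupon-collector argument and sum the harmonic series, which yields the identical bound. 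Your route is more self-contained but carries one extra obligation that drift does not: the waiting times only add if the unsolved count is monotone. You address this, but your claim that ``solved blocks are permanent'' is slightly too strong --- since acceptance uses $\ge$, a single mutation can revert one all-ones block to all-zeros while simultaneously solving another, leaving total fitness unchanged, so block identities are not preserved. What is actually true, and all your harmonic sum needs, is that the \emph{number} of solved blocks cannot decrease: every damaged block costs at least the $+1$ of fitness that one newly solved block supplies, so an accepted step solves at least as many blocks as it damages. With that correction your argument closes; the paper's drift formulation simply sidesteps this bookkeeping, while yours avoids citing the drift theorem.
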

\begin{proof}
First, we separate the search process into two stages. In stage one, at least one substring is neither at the local optimum nor the global optimum of the trap function. In stage two, all substrings are either at the local optimum or the global optimum and the search finishes when all substrings are at the global optimum. In the first stage, it only requires a single 1 being flipped to 0 for an increase of fitness, this is then reduced to a OneMax problem with a problem size of $m(k-1)$ and mutation probability $\frac{1}{mk}$ since each local optimum provides a fitness of $k-1$ at most. Without further proving, the expected time spent in stage 1 is $O(mk\ln{mk})$. For stage two we apply the multiplicative drift theorem\cite{lengler2020drift}. Let state $s$ be the number of non-optimal substrings. To make an improvement, a whole substring needs to be flipped at once, this yields a multiplicative drift of at least:
\begin{equation}
\begin{split}
    \Delta_{t}(s)\geq & s(\frac{1}{mk})^{k}(1-\frac{1}{mk})^{(m-s)k}\\
                \geq & s(\frac{1}{mk})^{k}(1-\frac{1}{mk})^{mk}\\
                \geq & \frac{s}{e}(\frac{1}{mk})^{k} \\
                = & \delta s 
\end{split}
\end{equation}
where $\delta = \frac{1}{e(mk)^{k}}$. This gives the bound:
\begin{equation}
\begin{split}
    O(\frac{1 + \ln{m}}{\delta}) = & O(e(mk)^{k} + e\ln{m}(mk)^{k}) \\
                                = & O(\ln{m}(mk)^k)
\end{split}
\end{equation}
\end{proof}

The runtime in stage 1 is absorbed by stage 2, thus the final upper bound is $O(\ln{m}(mk)^k)$.

\subsection{GOMEA runtime on the Concatenated Trap Function}

The proof consists of two parts. First, we show that the initial population will contain all optimal substrings most of the time for a large enough population size. Then, assuming a truthful MP FOS is given, we show that GOMEA can recombine the optimal substrings across the population efficiently by approximating this process as a diffusion process.

\begin{lemma}
\label{lem:gomea_pop}
The probability of all $m$ optimal substrings to occur at least once in a population size $\mu=cm2^{k}$ is $1-\epsilon$ where $\epsilon = me^{-cm}$ which is exponentially small with $m$ and $c=\Omega(1)$.
\end{lemma}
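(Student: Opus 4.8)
The plan is to treat each subfunction block independently and apply a union bound, exactly as in a coupon-collector style argument. First I would fix a block $i \in \{1,\dots,m\}$ and recall that its optimal substring is a single fixed pattern of length $k$ (the global optimum $1^k$ of the $k$-bit trap). Since every one of the $\mu$ individuals is produced by uniform initialization, the $k$ bits of any fixed individual restricted to block $i$ are independent fair coins, so the probability that a given individual carries the optimal substring in block $i$ is exactly $2^{-k}$, and the probability that it does not is $1 - 2^{-k}$.

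Next I would bound the probability that block $i$ is \emph{uncovered}, i.e.\ that none of the $\mu$ independent individuals carries its optimal substring; this is $(1-2^{-k})^{\mu}$. Substituting $\mu = cm2^{k}$ and using the standard inequality $1 - x \le e^{-x}$ with $x = 2^{-k}$ gives
\begin{equation}
(1-2^{-k})^{\mu} \le e^{-\mu 2^{-k}} = e^{-cm},
\end{equation}
where the $2^{k}$ in $\mu$ cancels the per-individual success probability $2^{-k}$ to leave the clean exponent $-cm$. Thus each individual block fails to be covered with probability at most $e^{-cm}$.

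Finally I would take a union bound over the $m$ blocks: the probability that \emph{some} block is uncovered is at most $m\,e^{-cm} =: \epsilon$, so all $m$ optimal substrings appear simultaneously in the population with probability at least $1-\epsilon$, as claimed. For $c = \Omega(1)$ the factor $e^{-cm}$ dominates the polynomial prefactor $m$, so $\epsilon$ vanishes exponentially in $m$.

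There is no serious obstacle here; the argument is a direct union bound, and the only points requiring a little care are keeping the exponential estimate in the correct direction so that the cancellation above is exact, and observing that the stated $\epsilon$ is an upper bound rather than the exact failure probability. In fact, because the blocks occupy disjoint bit positions and individuals are drawn independently, the coverage events across blocks are mutually independent, so one could instead write the exact value $1 - \bigl(1-(1-2^{-k})^{\mu}\bigr)^{m}$; the union bound is looser but yields precisely the form $\epsilon = me^{-cm}$ needed for the subsequent diffusion argument.
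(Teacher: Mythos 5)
Your proof is correct and follows essentially the same route as the paper: bound the per-block failure probability by $e^{-\mu/2^{k}} = e^{-cm}$ and then take a union bound over the $m$ blocks. The only (cosmetic) difference is that you obtain the per-block bound directly from $(1-2^{-k})^{\mu} \le e^{-\mu 2^{-k}}$, whereas the paper invokes a Chernoff bound on $P(X_i = 0)$; both yield the identical estimate.
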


\begin{proof}
Let $X_{i}$ denote the number of optimal substrings for the $i$-th trap function in the population. It is clear that $E[X_{i}] = \mu \frac{1}{2^k}$ as the probability of a uniformly initiated bitstring with size of $k$ being all ones is $\frac{1}{2^k}$. Applying the Chernoff bound to the probability that $X_{i} = 0$ i.e. none of the substring at subfunction $i$ is all ones.

\[P(X_{i}=0)\leq e^{\frac{-\mu}{2^{k}}}\]

Using the union bound, the probability that at least one subfunction has no optimal strings for $\mu$ individuals is at most:
\[P \left(\bigcup_{i=1}^{m}P(X_{i}=0) \right) \leq me^{\frac{-\mu}{2^{k}}}\]

The probability for each subfunction optimal string to appear at least once in the population is then at least $1 - me^{\frac{-\mu}{2^{k}}}$. Let $\mu = cm2^k$, the probability becomes $1-me^{-cm}$ with the second term being exponentially small with m.
\end{proof}

\begin{theorem}
\label{thm:gomea_runtime}
    The expected optimization time of GOMEA with a truthful FOS on concatenated trap function with $m$ substrings of length k is $O(m^{3}2^{k})$ with a probability of $1-\epsilon$ where $\epsilon = me^{-cm}$ which is exponentially small with $m$, if the population size $\mu=cm2^{k}$ where $c=\Omega(1)$
\end{theorem}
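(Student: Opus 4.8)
The plan is to condition on the high-probability event of Lemma~\ref{lem:gomea_pop} and then bound the number of generations until a single individual accumulates all $m$ optimal substrings, modelling the spread of each optimal substring through the population as a diffusion (epidemic) process. The crucial structural observation I would establish first is a \emph{monotonicity} property: because the problem is additively separable and the FOS is truthful, the element $\mathcal F_i$ touches exactly the variables of subfunction $i$, so a GOM swap on $\mathcal F_i$ changes only that subfunction's contribution. Hence if $p_0$ already carries the all-ones (optimal) substring on block $i$, any donor that would overwrite it strictly lowers that block's fitness, and the offspring is rejected. Consequently the set of individuals carrying the optimal substring for block $i$ is non-decreasing across generations, and the whole task reduces to bounding how fast each optimal substring, present at least once by Lemma~\ref{lem:gomea_pop}, saturates the population.

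Next I would fix a block $i$ and track $n_i(t)$, the number of carriers in generation $t$. When a non-carrier $p_0$ processes $\mathcal F_i$, its uniformly chosen donor is a carrier with probability about $n_i(t)/\mu$, and in that case the swap is accepted (it raises block $i$ from value $<k$ to $k$). So each of the $\mu - n_i(t)$ non-carriers becomes a carrier independently with probability at least $n_i(t)/\mu$, giving the logistic recursion $E[n_i(t+1)] \ge n_i(t) + (\mu - n_i(t))\,n_i(t)/\mu$. While $n_i(t)\le \mu/2$ this is at least a constant-factor (roughly doubling) increase per generation, and once $n_i(t) > \mu/2$ the number of remaining non-carriers contracts super-geometrically. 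I would turn these expectations into high-probability statements with a multiplicative drift / Chernoff argument, concluding that block $i$ saturates the population within $O(m)$ generations with high probability (the logistic growth in fact gives a near-logarithmic number of generations, comfortably inside this budget). Multiplying by the per-generation cost of $\Theta(\mu m) = \Theta(m^2 2^k)$ fitness evaluations yields the target $O(m^3 2^k)$.

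To finish, I would apply a union bound over the $m$ blocks: since the swaps on distinct FOS elements are driven by independent donor draws and, by separability, never interfere, all $m$ optimal substrings saturate the population within the same time window except with probability $m$ times a small per-block failure probability. At that point every individual---in particular the fittest one returned---carries all $m$ optimal substrings and is thus globally optimal. Combining this with the $1-\epsilon$ event of Lemma~\ref{lem:gomea_pop} (with $\epsilon = m e^{-cm}$) gives the stated overall success probability.

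The main obstacle I anticipate is making the diffusion argument rigorous rather than heuristic: the per-generation acquisition probability must be lower-bounded \emph{uniformly} despite the dependencies introduced by sampling donors without replacement and by in-place population updates within a generation, and the multiplicative growth must then be shown to concentrate (via Chernoff or a drift theorem) so that the generation bound holds with high probability simultaneously for all $m$ blocks. A secondary subtlety is guaranteeing \emph{co-localization}---that the substrings end up in one individual---which the monotone saturation to (near-)full frequency handles, but which requires care in trading full fixation against an $O(1/m)$ per-block shortfall in the union bound.
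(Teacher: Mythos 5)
Your proposal follows essentially the same route as the paper: the monotonicity of optimal-substring carriers under a truthful FOS (no hitchhiking), a logistic/diffusion model for the spread of each optimal substring from the single copy guaranteed by Lemma~\ref{lem:gomea_pop}, and a union/product argument over the $m$ blocks, yielding $O(m^{3}2^{k})$ evaluations on the $1-\epsilon$ event. The only differences are bookkeeping (the paper charges one uniformly selected individual per GOM step, which is where the $1/\mu$ factor in its drift equation comes from, whereas you process the whole population each generation --- the total evaluation counts agree) and that you explicitly flag the concentration and dependence issues that the paper's ODE heuristic passes over silently.
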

\begin{proof}
    The property of the Gene Pool Optimal Mixing operator determines that when a trap function optimum is obtained by an individual, it will never be lost from recombination given that the FOS is truthful, since the mixing is done one subset at a time and only an improvement is accepted. This means that there is no hitchhiking where optimal genes are out-competed by numerous non-optimal genes. When a uniform selection is used, the spreading rate of an optimal substring depends solely on its current concentration in the population.This also means the portion of optimal substrings at a timestep for each subfunction are i.i.d. Formally, let $E(P_{t})$ denote the expected portion of individuals containing an optimal substring for a subfunction at the $t$th GOM. At each GOM an offspring is produced using each subset in the FOS as a crossover mask with a random donor each time (i.e. $m$ times for this problem), and the probability of a randomly selected parent gaining an optimal substring is $(1-E(P_{t}))E(P_{t})$. The expected change is then:
    \[E(P_{t+1}) - E(P_{t}) = \frac{(1-E(P_{t}))E(P_{t})}{\mu}\]

    For the purpose of asymptotic analysis, this stochastic process can be estimated by a continuous-time diffusion process where $p_{t}$ denotes the portion of optimal substrings at time step $t$, location $i$:
    \[\frac{p_{t}}{dt}= \frac{(1-p_{t})p_{t}}{\mu}\]
    This is a logistic differential equation, and has a known solution with $p_{0} = \frac{1}{\mu}$:
    \[p_{t} = \frac{1}{1+(\mu + 1)e^{-(\frac{t}{\mu})}}\]
    Let $t=\mu m$
    \[p_{t}=\frac{1}{1+(\mu+1)e^{-m}}\]
    substitute $\mu = cm2^{k}$ in
    \[p_{t} = \frac{1}{1+(cm2^k)e^{-m}}\]
    Assume $m$ grows faster than $k$, i.e. $m>ck$ where $c$ is a constant greater than $\log_{2}^{e}$, $p_{t}$ is exponentially close to 1 as $m$ grows, which also means that the portion of individuals with all optimal substrings $\prod_{1}^{m}p_{t}$ is exponentially close to 1. Thus, it takes $O(m^{2}2^{k})$ GOM operations for the optimal solution of concatenated trap function to take over the population, note that each GOM contains $m$ evaluations, resulting in an take over time of $\Theta(m^{3}2^{k})$. $O(m^{3}2^{k})$ is then an upper bound of the first hitting time of the occurrence of the optimal solution.
        
\end{proof}
This is a significant speed-up compared to the 1+1 EA for $k>3$ since we avoided the expensive $(mk)^{k}$ term, which corresponds to the time complexity of jumping $k$ bits by mutation only. This shows the tremendous advantage that can be achieved by having a diverse population and a recombination operator that preserves optimal subsolutions.

This analysis does not consider the affect of mutation. However, if the mutation is applied after a crossover with an FOS element as mask and is limited to the FOS element only, it does not change the fact that an optimal substring will not be lost during this process i.e. no hitchhiking. The probability of a selected parent gaining an optimal substring will then be multiplied by some constant greater than $\sqrt[m]{\frac{1}{e}}$ as the probability of mutation flips no bits, as a consequence, $t$ will need to be at most $\sqrt[m]{e}$ times larger for $p_{t}$ to converge to 1 at the same speed as no mutation. For a randomly initialized population, mutation is still necessary in case the population does not contain all optimal subsolutions, even though it would mean a $\Omega((mk)^{k})$ runtime in this case. For a very short time window, a less than $k$ bit jump might also result in a gain of optimal substring before the selective pressure eliminate the number of ones in the sub-optimal substrings.

This upper bound can be expanded to any separable function consist of $m$ subfunctions with $k$-bit bitstrings as domain, if replacing any subfunction value to the optimal value yields an improvement in the total fitness. Formally:
\[f_{i}:[0,1]^{k}\mapsto \mathbb{R},f_{i}(\hat{x})\geq f_{i}(x)\text{ for } x\in [0,1]^{k}  ,G:\mathbb{R}^{m} \mapsto \mathbb{R}\]
\[F(x_{1},x_{2},...,x_{m}) = G(f_{1}(x_{1}), f_{2}(x_{2}), ...,f_{m}(x_{m})) \]
\[G(f_{1}(x_{1}), f_{2}(x_{2}), ...,f_{m}(x_{m}) \leq G( ... f_{i}(\hat{x}_{i}), ...,f_{m}(x_{m})) \text{ for } i\in [m] \]

\section{Experiment}

\subsection{Verification of Theorem \ref{thm:gomea_runtime}}
To verify the upper bound we initialize a population of size $m2^k$, with bitstrings length $mk$ with all 0 bits, and for each FOS element, a random individual receives a substring of all ones. This is to simulate the worst case scenario while not considering the case where the population does not contain all the optimal substrings. All results are averaged over 100 runs.

\begin{figure*}[ht]
  \centering
  \begin{tabular}{cc}
  \subfloat[$k=4$]{\includegraphics[width=0.45\linewidth]{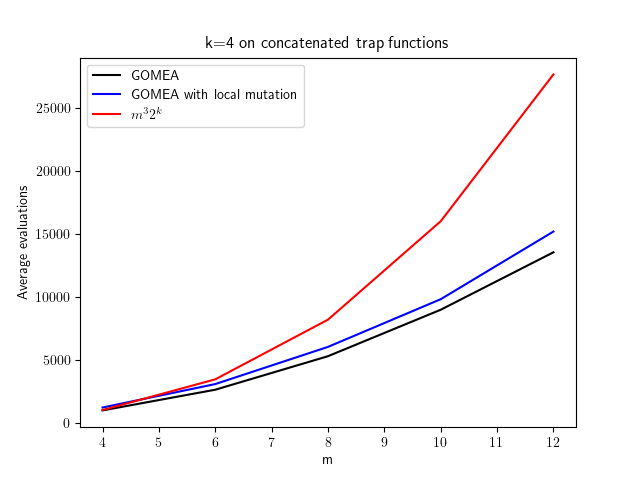}} &
  \subfloat[$k=5$]{\includegraphics[width=0.45\linewidth]{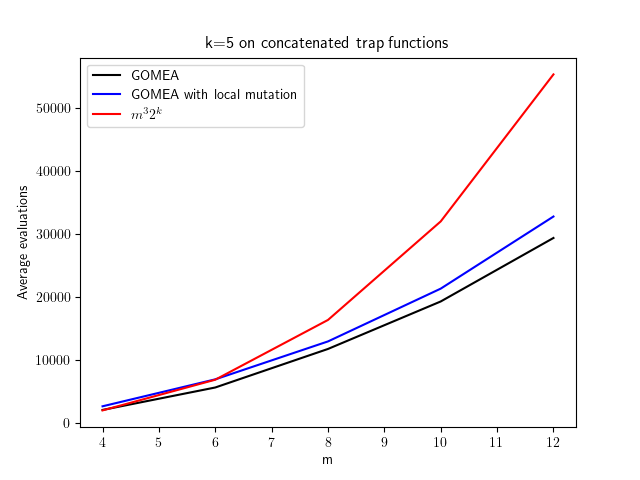}} \\
  \subfloat[$k=6$]{\includegraphics[width=0.45\linewidth]{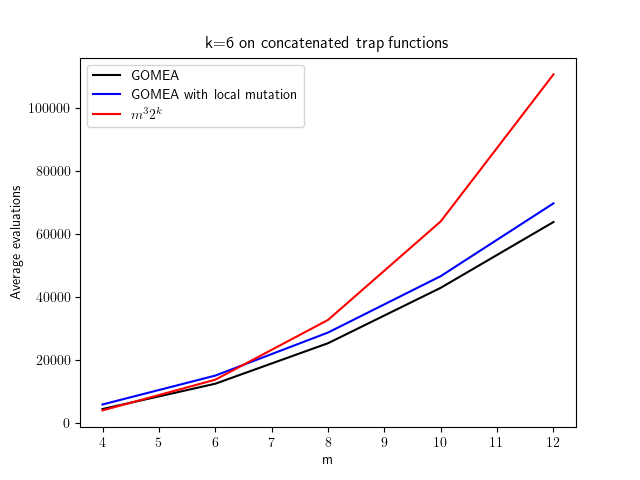}} &
  \subfloat[$k=7$]{\includegraphics[width=0.45\linewidth]{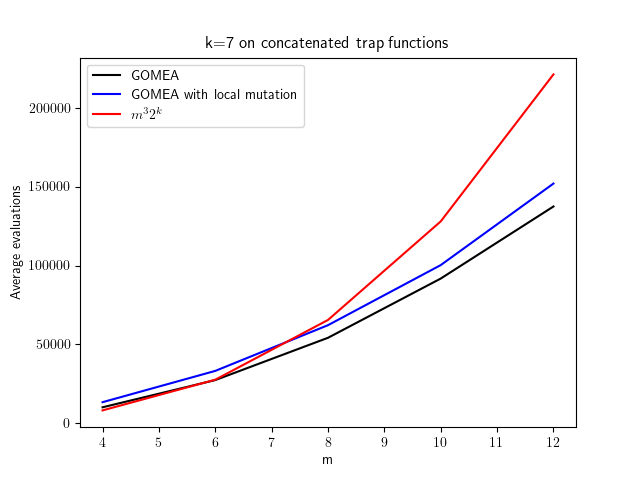}} \\
  \end{tabular}
  \begin{tabular}{c}
  \subfloat[$(1+1) EA$]{\includegraphics[width=0.45\linewidth]{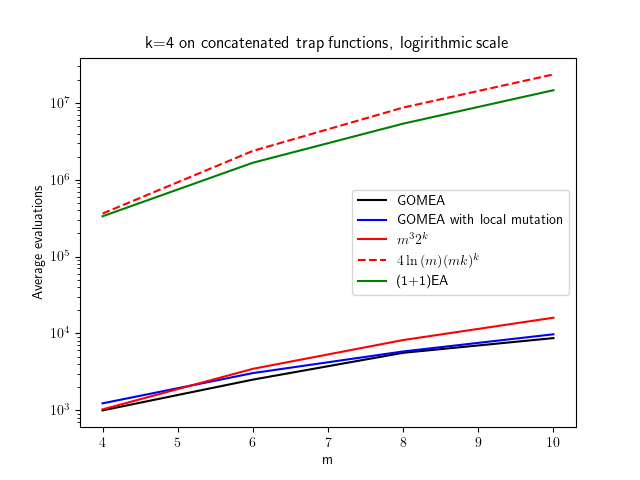}}
  \end{tabular}
  \caption{(a) to (d) Upper bound vs worst case performance (e) Comparison with (1+1) EA with k=4 in logarithmic scale}
  \label{fig:gomea_bound}
\end{figure*}

The results are shown in Figure \ref{fig:gomea_bound}. It is clear that the upper bound is valid in most cases, except for when $m$ is too small compared with $k$, as we have discussed that $m$ needs to outgrow $k$ by a constant for the bound to be accurate. The bound is also tight when $m$ is small, and the gap between the bound and the actual runtime increases as $m$, which is expected, since the bound is an estimation of take over time of the optimal solution rather than the hitting time. The effect of including a local mutation where only the bits in the current FOS element are mutated, agrees with our analysis before, and appears to multiply the runtime by a factor that reduces with $m$. 

\subsection{Success Rate with Constant $c$}

We have shown that the probability of a population containing all optimal substrings is exponentially close to 1 with $c$, when $\mu=cm2^{k}$. To verify this empirically, we run GOMEA 1000 times on the concatenated trap function with $k=4$, $m = 6$. To show the capability of GOM to preserve optimal genes, we also run a ($\mu$+1) GA with deterministic crowding\cite{louis1993genetic}, uniform crossover and uniform selection. GOM and deterministic crowding shares the property where an offspring directly competes with its closest parent, thus it is only for a fair comparison for GA to apply this diversity preserving mechanism. A run is considered successful, if the optimal solution is found within $2cm^{3}k^{2}$. For ($\mu$+1) GA, this limit is increased by 10 times.

The results in Figure \ref{fig:gomea_success} suggest that as $c$ increases, the success rate does approach 1 exponentially with GOMEA. With a local mutation, the success rate is also constantly higher. The reason is as we have discussed, that before the non-optimal substrings are pushed to the local minimum, some optimal strings might be gained by mutation at a much higher probability than $k$-bits jump. Consequentially, the runtime is higher since the optimal substrings gained from mixing might be disrupted by mutation. The ($\mu$+1) GA on the other hand, even with all optimal substrings available in the population, only found optimal solution around half of the time at $c=2$, with 10 times higher runtime limit.

\begin{figure*}[ht]
  \centering
  \begin{tabular}{cc}
  \subfloat[]{\includegraphics[width=0.47\linewidth]{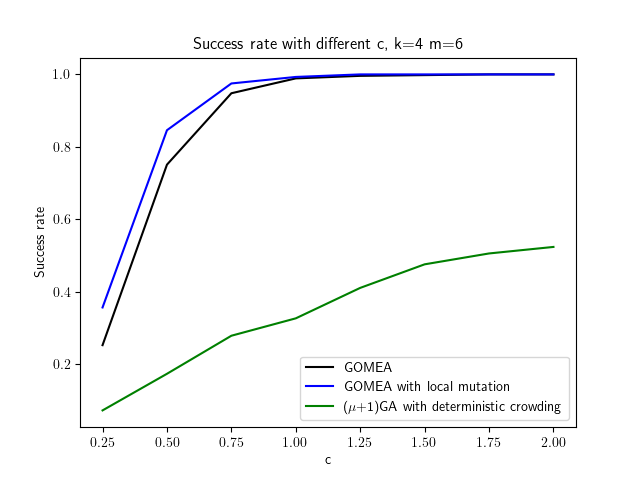}} &
  \subfloat[]{\includegraphics[width=0.47\linewidth]{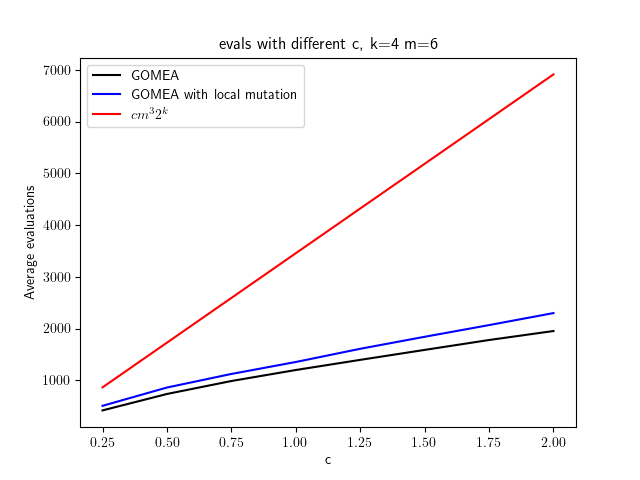}} \\
  \end{tabular}
  
  \begin{tabular}{c}
  \subfloat[]{\includegraphics[width=0.47\linewidth]{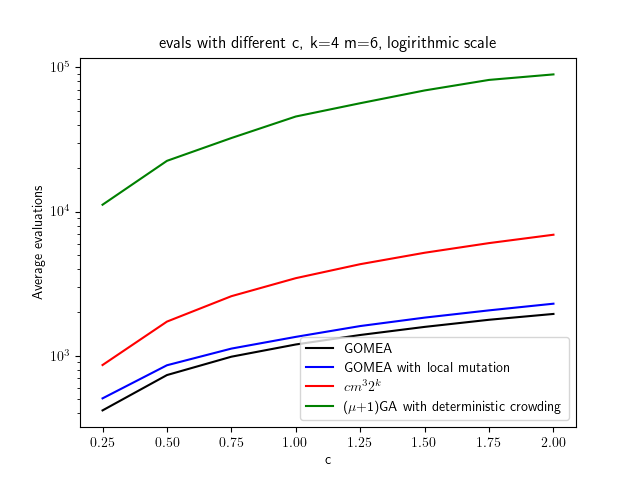}}
  \end{tabular}
  
  \caption{(a) The success rate of GOMEA and ($\mu$+1)EA with deterministic crowding (b) Corresponding average runtime of succeeded runs.$k=4$ ,$m = 6$, over 1000 runs (c) Average runtime including ($\mu$ + 1) GA in logarithmic scale}
  \label{fig:gomea_success}
\end{figure*}

\section{Future Work}
The theoretical and empirical results presented in this paper demonstrate the effectiveness of GOMEA on the concatenated trap function, a problem with a clear modular structure and deceptive subfunctions. Our analysis shows that GOMEA can efficiently exploit this problem structure by recombining optimal subsolutions using a truthful linkage model. This leads to several promising directions for future research.

It would be interesting to investigate the performance of GOMEA on other classes of modular problems, such as the NK-landscape. These problems exhibit a more complex structure, with dependencies between subproblems at different levels of hierarchy. Although empirical results exist, analyzing GOMEA's runtime on these problems could provide further insights into its ability to deal with more challenging problem structures.

Our analysis assumes a truthful linkage model, where the FOS perfectly captures the dependencies between problem variables. In practice, however, the linkage model may not be known a priori and needs to be learned from the population. Analyzing the impact of linkage learning on GOMEA's performance and developing efficient linkage learning techniques for problems with unknown structure is an important direction for future work.

The concatenated trap function is particularly interesting because it represents a class of problems that can be decomposed into NP hard subproblems. The insights we gained on studying this problem thus might potentially transfer into problems sharing similar properties, for example, optimization of a mixture of experts system, which is now a backbone of Large Language Models, or the optimization of neural network weights, which has been shown to have a modular inter-dependency\cite{qiao2023modularity}.

\section{Conclusion}

We presented the first runtime analysis of the Gene-pool Optimal Mixing Evolutionary Algorithm (GOMEA) on the concatenated trap function. By leveraging a truthful linkage model, GOMEA is able to efficiently exploit the modular problem structure and recombine optimal subsolutions. We derived an upper bound of $O(m^3 2^k)$ on the expected runtime, showing a significant speedup compared to the simple (1+1) EA, which requires $O(\ln m(mk)^k)$ evaluations.

Our empirical results verified the derived upper bound and demonstrated GOMEA's ability to find the optimal solution with high probability when the population size is set appropriately. The experiments also highlighted the importance of the Optimal Mixing operator in preserving important building blocks during variation, as evident from the comparison with a $(\mu+1)$ GA using deterministic crowding.

This work provides new theoretical insights into the behavior of GOMEA and its ability to exploit problem structure through linkage learning and optimal mixing. The analysis of the concatenated trap function, a problem consisting of multiple deceptive subfunctions, showcases GOMEA's potential for solving optimization problems with modular interdependencies.

Future research directions include extending the analysis to more complex modular problems, such as NK-landscapes, and investigating the impact of linkage learning on GOMEA's performance when the problem structure is unknown. Furthermore, the insights gained from studying the concatenated trap function could potentially transfer to real-world optimization problems with similar properties, such as the optimization of mixture of experts systems or neural network weights.

In conclusion, this paper contributes to the theoretical understanding of GOMEA and highlights its effectiveness in exploiting problem structure through informed variation operators. The runtime analysis on the concatenated trap function serves as a foundation for further investigations into the behavior and performance of GOMEA on a broader class of optimization problems.

\section{Continuation}

\subsection{Definitions}
\subsubsection{Generalized trap function}
Following the definition in \cite{deb1993analyzing}. A generalized trap function has three extra parameters. $a$ is the local optimal, $b$ is the global optimal, and $z$ defines the starting point of both slopes leading to the local optimal and global optimal.

\begin{equation}
f_{gen\_trap(k)}(x^{(i)}) = \begin{cases}
\frac{a}{z}(z-u(x^{(i))}), & \text{if } u(x^{(i)}) \leq z \\
\frac{b}{k-z}(u(x^{(i))}-z), & \text{otherwise}
\end{cases}
\end{equation}
The trap function we have discussed above is a special case where $a=k-1, b=k, z=k-1$.
\subsubsection{Optimal Region}
We define the optimal region $\Bar{X_{*}}$ of a generalized trap function to be the set of all bitstrings which has a function value greater than the local optimal $a$. 
\begin{equation}
    \Bar{X_{*}} = \{x|f_{gen\_trap(k)}(x)>a\}
\end{equation}

In terms of $u$, this is a region between $[\ceil{\frac{a(k - z)}{b}} + z, k]$

The probability of a bitstring being uniformly initialized in the optimal region is thus $Pr(x\in \Bar{X_{*}})=Pr(Bin(k,0.5)\geq\ceil{\frac{a(k - z)}{b}} + z)$

\subsection{GOMEA with local mutation runtime on concatenated generalized trap function}
The strategy is similar as above. First, we calculate the population size required for each of the optimal region for each subproblem to occur at least once. Then, we show that GOMEA can effectively recombine those optimal regions and perform hill-climbing afterward.

For simplicity, we refer $Pr(x\in \Bar{X_{*}})$ as $p_{*}$

\begin{lemma}
The probability of all $m$ optimal regions to occur at least once in a population size $\mu=\frac{c}{p_{*}}m$ is $1-\epsilon$ where $\epsilon = me^{-cm}$ which is exponentially small with $m$ and $c=\Omega(1)$.
\end{lemma}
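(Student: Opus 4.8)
The plan is to mirror the argument of Lemma~\ref{lem:gomea_pop} exactly, replacing the single-subfunction success probability $\frac{1}{2^k}$ by the generalized value $p_*$. The only structural facts I need are that the $mk$ bits of each individual are drawn independently and uniformly, so that for a fixed subproblem $i$ the event that an individual's $i$-th block lands in the optimal region $\Bar{X_*}$ occurs independently across individuals, each with probability exactly $p_*$.

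First I would let $X_i$ denote the number of individuals in the population whose $i$-th subproblem lies in $\Bar{X_*}$. By the independence just noted, $X_i \sim \mathrm{Bin}(\mu, p_*)$, so $E[X_i] = \mu p_*$. Next I would bound the probability that subproblem $i$ is never optimally initialized:
\[
P(X_i = 0) = (1 - p_*)^{\mu} \le e^{-\mu p_*},
\]
using $1 - x \le e^{-x}$ (equivalently, the lower-tail Chernoff bound as in the earlier lemma). A union bound over the $m$ subproblems then gives
\[
P\!\left(\bigcup_{i=1}^{m} \{X_i = 0\}\right) \le m\, e^{-\mu p_*}.
\]

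Finally I would substitute the prescribed population size $\mu = \frac{c}{p_*} m$, which cancels $p_*$ in the exponent and yields $m\, e^{-\mu p_*} = m\, e^{-cm}$. Hence every optimal region appears at least once with probability at least $1 - m\, e^{-cm} = 1 - \epsilon$. Since $c = \Omega(1)$, the factor $e^{-cm}$ dominates the linear prefactor $m$, so $\epsilon$ decays exponentially in $m$, as claimed.

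I do not anticipate a genuine obstacle here: the lemma is a direct reparametrization of Lemma~\ref{lem:gomea_pop}, and the choice of $\mu$ is precisely what makes the $p_*$ factors cancel in the exponent. The one point worth verifying is that $p_* = \Pr(\mathrm{Bin}(k,0.5) \ge \ceil{\frac{a(k-z)}{b}} + z)$ is strictly positive, so that $\mu = \frac{c}{p_*} m$ is well defined and finite; this holds whenever the optimal region is nonempty, which is guaranteed by the definition of a valid generalized trap function with $b > 0$.
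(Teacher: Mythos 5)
Your proposal is correct and follows essentially the same route as the paper, which simply reuses the proof of Lemma~\ref{lem:gomea_pop} with $\frac{1}{2^k}$ replaced by $p_*$: bound $P(X_i=0)$ by $e^{-\mu p_*}$, union-bound over the $m$ subproblems, and substitute $\mu=\frac{c}{p_*}m$. Your use of the exact identity $P(X_i=0)=(1-p_*)^{\mu}$ in place of the paper's Chernoff bound is a cosmetic (and slightly cleaner) variation, not a different argument.
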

\begin{proof}
Same as the proof in Lemma \ref{lem:gomea_pop}, as we only need to replace $\frac{1}{2^k}$ to $p_{*}$
\end{proof}

\begin{theorem}
\label{thm:gomea_bound_2}
    The expected optimization time of GOMEA with a truthful FOS and a local mutation with mutation rate of $\frac{1}{k}$ on concatenated generalized trap function with $m$ substrings of length k with parameters $a,b,z$ is at most $\floor{\frac{(b - a)(k - z)}{b}}(m\ln{m})(ek(1+\frac{c}{p_{*}}) + \frac{ce}{p_{*}}m\ln{m}) + \frac{c}{p_{*}}m^{3}$ which is $O(\frac{1}{p_{*}}m^{3})$ with a probability of $1-\epsilon$ where $\epsilon = me^{-cm}$ which is exponentially small with $m$, if the population size $\mu=\frac{c}{p_{*}}m$ where $c=\Omega(1)$
\end{theorem}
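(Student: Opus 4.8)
The plan is to reuse the two-stage template of Theorem~\ref{thm:gomea_runtime}, but to refine its second stage into a \emph{spreading} process and a \emph{hill-climbing} process, which together produce the two summands of the claimed bound. First I would invoke the population lemma above (the generalization of Lemma~\ref{lem:gomea_pop} obtained by replacing $\frac{1}{2^k}$ with $p_*$): for $\mu=\frac{c}{p_*}m$ the initial population contains, with probability $1-\epsilon=1-me^{-cm}$, at least one substring lying in every optimal region $\bar{X_*}$. I would then condition on this event for the rest of the argument, so that only the waiting times remain random and the entire failure probability $\epsilon$ is inherited from this lemma.

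Next I would establish the preservation property, the analogue of the ``no hitchhiking'' observation in Theorem~\ref{thm:gomea_runtime}. Because the FOS is truthful and the problem separable, improving total fitness in a single GOM step is equivalent to improving the one subfunction whose variables form the current FOS element, so a substring already in $\bar{X_*}$ (subfunction value strictly above the local optimum $a$) can never be overwritten by a donor substring drawn from the deceptive basin, nor destroyed by the local mutation confined to that element, since any such change lowers the subfunction value and is rejected. On the region $u>z$ the generalized trap value $\frac{b}{k-z}(u-z)$ is strictly increasing in $u$, so within $\bar{X_*}$ every accepted change only raises $u$; hence each subfunction needs exactly $k-z-\ceil{\frac{a(k-z)}{b}}=\floor{\frac{(b-a)(k-z)}{b}}$ improving single-bit flips to climb from the boundary of $\bar{X_*}$ to the global optimum $u=k$ (using $\floor{-x}=-\ceil{x}$), which is precisely the leading factor in the bound.

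With preservation in hand I would analyze the two processes separately. The spreading of a fixed optimal-region substring across the population obeys the same logistic recurrence $E(P_{t+1})-E(P_t)=\frac{(1-E(P_t))E(P_t)}{\mu}$ as in Theorem~\ref{thm:gomea_runtime}, whose takeover time is $O(\mu m)$ GOM operations and hence $O(\mu m^2)=O(\frac{c}{p_*}m^3)$ evaluations, giving the additive $\frac{c}{p_*}m^3$ term. For hill-climbing, at each of the $\floor{\frac{(b-a)(k-z)}{b}}$ levels the rate-$\frac1k$ local mutation produces a beneficial single-bit flip in a length-$k$ FOS element with probability at least $\frac1k(1-\frac1k)^{k-1}\geq\frac{1}{ek}$, so the expected wait per level per subfunction is at most $ek$ GOM operations in which the relevant individual is selected as $p_0$. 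Carefully charging the uniform-selection factor ($\frac1\mu$ per GOM, with $\mu=\frac{c}{p_*}m$), the re-spreading cost of each freshly climbed variant, and the coupon-collector requirement that all $m$ subfunctions complete all their levels — the source of the $m\ln m$ factors — and multiplying by the $\floor{\frac{(b-a)(k-z)}{b}}$ levels, the bookkeeping collapses into the first summand $\floor{\frac{(b-a)(k-z)}{b}}(m\ln m)\bigl(ek(1+\frac{c}{p_*})+\frac{ce}{p_*}m\ln m\bigr)$; I expect pinning down these exact constants (rather than just their asymptotic order) to be the most tedious part of the write-up.

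The hard part will be coupling the spreading and hill-climbing processes rigorously, since they interfere: a substring that has just climbed a level is a new variant that must itself be spread, while fresh climbers are continually supplied by the spreading of lower-level variants. I would handle this with a layered argument that, at each level, pessimistically charges both a full hill-climbing wait and a full re-spread, so the effects compose into the product-of-factors inner term instead of interacting in an uncontrolled way; I would also justify that the logistic diffusion estimate of Theorem~\ref{thm:gomea_runtime} survives the addition of mutation, since the local mutation merely rescales each per-step success probability by a constant factor (the $\Omega(1)$ probability $(1-\frac1k)^k$ that it leaves an already-optimal FOS element untouched), exactly as in the discussion following Theorem~\ref{thm:gomea_runtime}, and therefore does not change the asymptotic order. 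Collecting the two summands and noting that for $c=\Omega(1)$ and $k$ subpolynomial in $m$ the dominant term is $\frac{c}{p_*}m^3$ then yields the stated $O(\frac{1}{p_*}m^3)$ bound at failure probability $\epsilon=me^{-cm}$.
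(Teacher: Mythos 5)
Your proposal follows essentially the same route as the paper's proof: condition on the population lemma with $1/2^k$ replaced by $p_*$, decompose progress into $\floor{\frac{(b-a)(k-z)}{b}}$ levels each split into a logistic spreading phase (with the $(1-\frac1k)^k\approx\frac1e$ mutation discount) and a mutation-driven climbing phase, pay a $\ln m$ factor for all $m$ subfunctions to finish (the paper phrases this as the expected maximum of $m$ i.i.d.\ approximately exponential waiting times rather than coupon collecting, but it is the same effect) and an $m$ factor for evaluations per GOM, and close with the $\mu m^2=\frac{c}{p_*}m^3$ final takeover term. The bookkeeping you defer is exactly the computation the paper carries out, so no substantive gap remains.
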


\begin{proof}
    Since the GOM operator will never lose a substring in the optimal region to a substring not in the optimal region, we can safely apply the same strategy in the previous proof of Theorem \ref{thm:gomea_runtime}. We consider $m$ i.i.d optimization process, with a level definition: $l(X) = max(0,max(u(X)) - (\floor{\frac{(b - a)(k - z)}{b}}))$. $l$ tells us how far has the best substring traveled along the slope toward the global optimal.There are $\floor{\frac{(b - a)(k - z)}{b}}$ levels at most and we will estimate the time spent in each level.

    We separate each level to two stages, in stage one, The best substring spread to a constant portion of the population. In stage two, the local mutation operator climb towards the global optimum by adding a 1.

    In the worst case, the population only contain one substring at level $l$, the spreading of such substring with time can be again modeled as a diffusion process where $p_{t}$ denotes the portion of substring at level $l$ at time step $t$. This time, however, the effect of mutation will put a coefficient of approximately $\frac{1}{e}$ to the diffusion rate since the probability of mutation not changing any bit is:$(1-\frac{1}{k})^{k}\approx\frac{1}{e}$ as $k$ approaches infinity.
    \[\frac{p_{t}}{dt}= \frac{(1-p_{t})p_{t}}{e\mu}\]
    With $p_{0} = \frac{1}{\mu}$
    \[p_{t} = \frac{1}{1+(\mu + 1)e^{-(\frac{t}{e\mu})}}\]
    Let $t=e\mu\ln{m}$
    \begin{equation}
    \begin{split}
        p_{t} & = \frac{1}{1+(\mu + 1)e^{-\ln{m}})} \\
              & = \frac{1}{1+\frac{\frac{c}{p_{*}}m + 1}{m}} \\
              & \approx \frac{1}{1+\frac{c}{p_{*}}}\\
    \end{split}
    \end{equation}
    Which is a constant. The time spent in this stage is thus $t= \frac{ce}{p_{*}}m\ln{m}$

    In stage two, with a constant portion of the population at level $l$,
    the probability of a local mutation improve a level $l$ substring is at least:
    \[\frac{1}{1+\frac{c}{p_{*}}}(\frac{1}{k})(1-\frac{1}{k})^{k-1}>\frac{1}{(ek)(1+\frac{c}{p_{*}})}\]
    The time spent in this stage is then at most: $ek(1+\frac{c}{p_{*}})$

    The expected time for the $i^{th}$ of the $m$ processes to reach maximum level is thus $E(T_{i})=\floor{\frac{(b - a)(k - z)}{b}}(ek(1+\frac{c}{p_{*}}) + \frac{ce}{p_{*}}m\ln{m})$

    However, as we are solving for a concatenation of such subproblems, the expected time for optimal subsolutions to occur at least once for all subproblems is $E(\max\limits_{m}(T_{i}))$. Assuming $T_{i}$ follows exponential distribution and given they are i.i.d, $E(\max\limits_{m}(T_{i}))$ can be approximated\cite{david2004order} as $E(T_{i})\ln(m)$. Each GOM operator also consists of $m$ function evaluations. The expected time spent on finding all optimal subsolutions is in the end $\floor{\frac{(b - a)(k - z)}{b}}(m\ln{m})(ek(1+\frac{c}{p_{*}}) + \frac{ce}{p_{*}}m\ln{m})$. This can be reduced to $O(mk\ln{m} + \frac{1}{p_{*}}m^{2}\ln^{2}{m})$

    Finally, GOMEA need to recombine the optimal subsolutions present at this point, using the same argument in the previous proof, the time is bounded by $\mu m^{2} =\frac{c}{p_{*}}m^{3} = O(\frac{1}{p_{*}}m^{3})$. Assuming $m$ grows faster than $k$, the bound of the previous level climbing process is absorbed, thus the final upper bound is $O(\frac{1}{p_{*}}m^{3})$ asymptotically.
\end{proof}

Again, this bound can be generalized to any subproblems with a given probability $p_{*}$ of being initialized in an optimal region, if an hill climber can efficiently optimize to the global optimum from any point within the optimal region. Including a hill climbing process provides a much deeper understanding of how GOMEA works on a much broader set of problems.

\subsection{Experiment}

\subsubsection{Verification of Theorem \ref{thm:gomea_bound_2}}
For convenience, we set $a=1$ and $b=k$, as we only need to change $z$ in order to control the starting point of the optimal region which is just $z+1$. We then run experiments with different settings and different resulting $p_{*}$ on the worst case population, where for each subproblem, a substring with $z+1$ ones occurs only once, with the rest of substrings being zeros.

We compare the worst case performance to the upper bound of the full expected runtime $E(T)$, and only the last term of $E(T)$, $\frac{1}{p_{*}}m^{3}$, excluding the hill climbing process. The results in Figure.\ref{fig:gomea_bound2}(a) to (e) showed that the runtime of the hill climbing is generously overestimated and the last term $\frac{1}{p_{*}}m^{3}$ seems to contribute the majority of the runtime. Figure.\ref{fig:gomea_bound2}(f) supported our hypothesis that the difficulty of the problem to GOMEA is dominated by $p_{*}$, regardless of the actual size of the problem.

\begin{figure*}[ht]
  \centering
  \begin{tabular}{cc}
  \subfloat[$k=4$]{\includegraphics[width=0.45\linewidth]{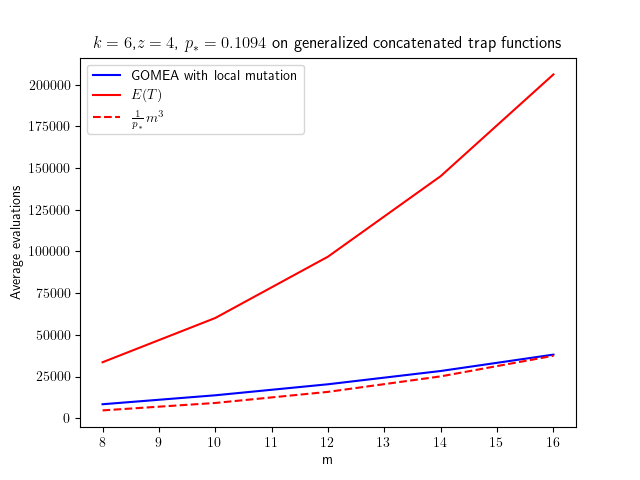}} &
  \subfloat[$k=5$]{\includegraphics[width=0.45\linewidth]{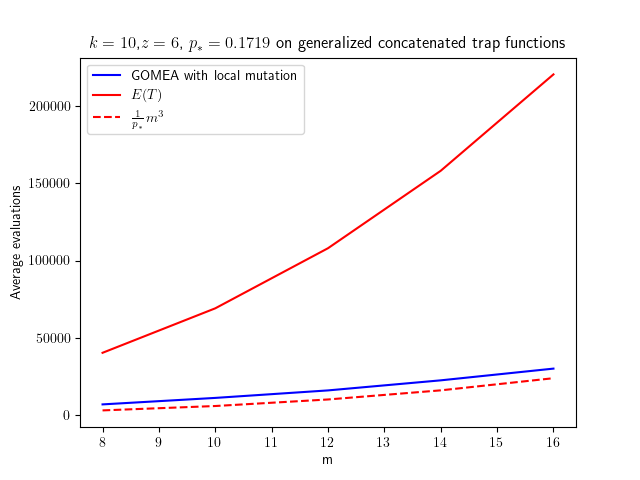}} \\
  \subfloat[$k=6$]{\includegraphics[width=0.45\linewidth]{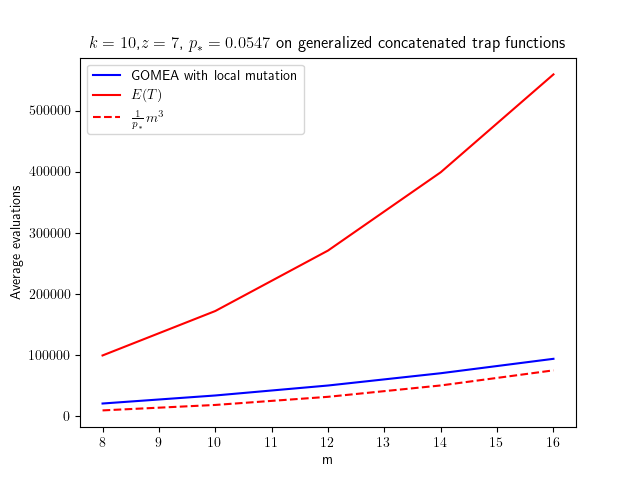}} &
  \subfloat[$k=7$]{\includegraphics[width=0.45\linewidth]{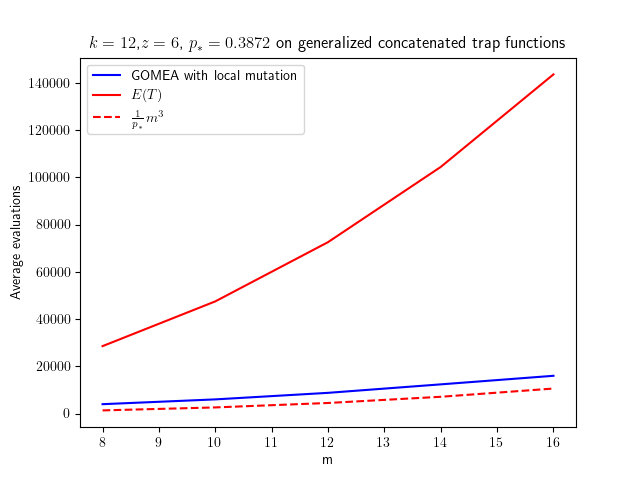}} \\
  \subfloat[$k=4$]{\includegraphics[width=0.45\linewidth]{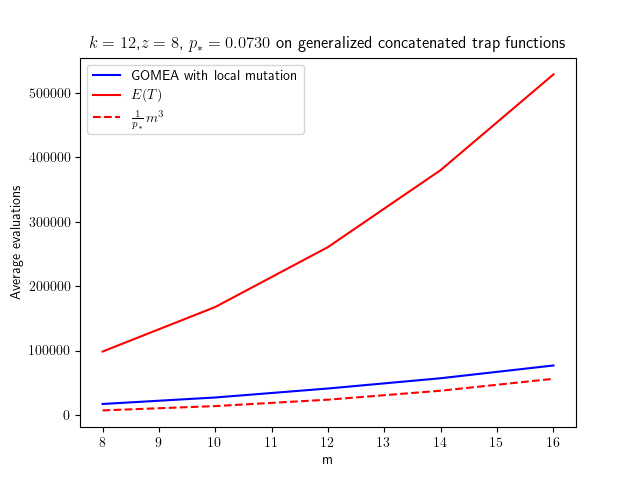}} &
  \subfloat[$k=5$]{\includegraphics[width=0.45\linewidth]{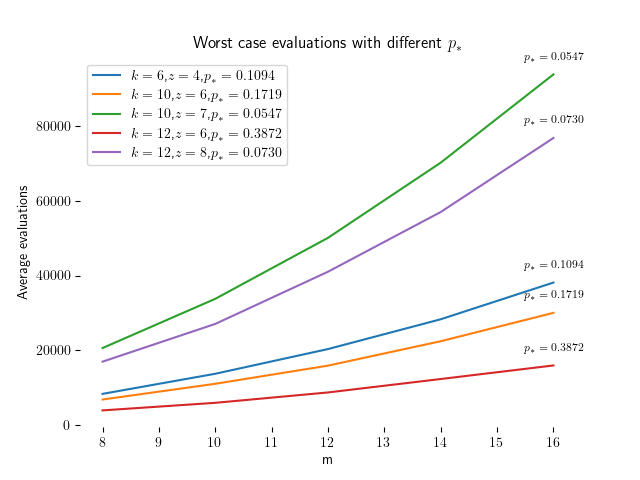}} \\
  \end{tabular}

  \caption{(a) to (e) Upper bound vs worst case performance (e) Comparison between different $p_{*}$}
  \label{fig:gomea_bound2}
\end{figure*}

\subsubsection{Success rate with c and comparison to GA}
A similar experiment is run on GOMEA and GA with deterministic crowding, verifying GOMEA's ability of maintaining successful subsolutions. 1000 runs are performed on the concatenated generalized trap function with $m=8$, $k=6$, $z=4$ with a $p_{*}=0.1094$ consequently. From the previous experiment, we see $\frac{c}{p_{*}}m^{3}$ is actually a pretty accurate estimation rather than the complete $E(T)$, thus the success condition for GOMEA is within 2 times of $\frac{c}{p_{*}}m^{3}$ and 20 times for GA.

The result in Figure.\ref{fig:gomea_success2}(c) and (e) confirmed that GOMEA required a much smaller population size with the concept of optimal region, as we only need substrings in the optimal region rather than be at the global optimal. Surprisingly, GA seems to perform pretty well on this problem as well. This may be because the problem is no longer deceptive, since we keep the local optimum at the value 1, the average schema fitness with more zeroes are going to be lower than the average schema fitness with more ones, resulting in benign hitchhiking, where the crossover favor one good gene over several bad genes. For example, if an existing optimal substring with a fitness value of 6 is to be disrupted during a crossover in GA with deterministic crowding, the offspring will only be accepted if it gains extra fitness value of 6 else where, and it is almost impossible with the local optimum having a low fitness value of 1. In other words, the optimal substrings are maintained even through a disruptive crossover.

We then designed a tailed trap function shown in Fig.\ref{fig:gomea_success2}(b), where the local optimum has a much higher fitness value but the optimal region has the same size. This is however, not a fully deceptive problem since the schemas with the size of $k-1$ are not misleading. With the same $p_{*}$, the performance of GOMEA is identical for these two problems, in fact, the expect runtime would be the same for a hill climber, as the number of bit flips required to escape the local optimum is the same. For GA with deterministic crowding however, the problem is suddenly impossible to solve, unless with extremely lucky initialization.

\begin{figure*}[ht]
  \centering
  \begin{tabular}{c|c}
  \subfloat[]{\includegraphics[width=0.45\linewidth]{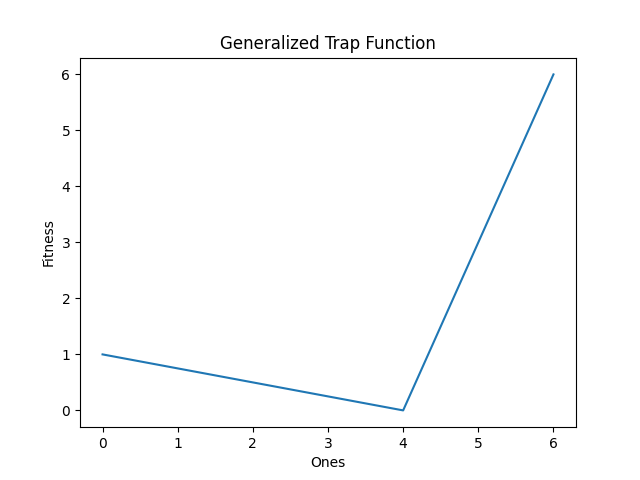}} &
  \subfloat[]{\includegraphics[width=0.45\linewidth]{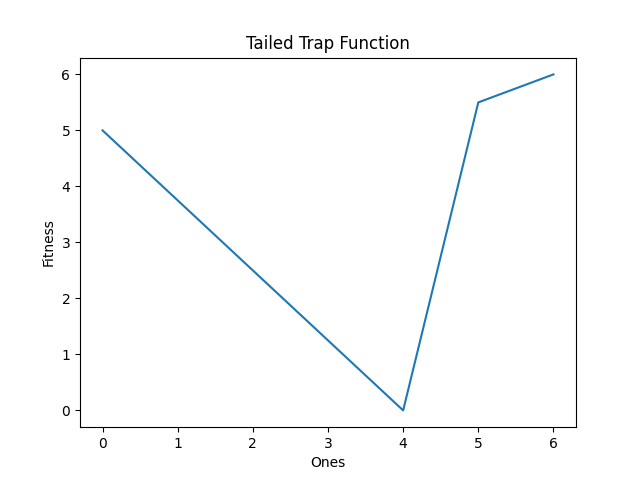}} \\
  \subfloat[]{\includegraphics[width=0.45\linewidth]{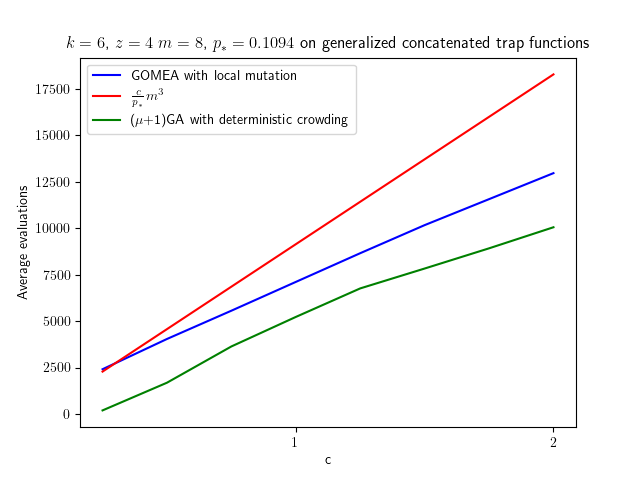}} &
  \subfloat[]{\includegraphics[width=0.45\linewidth]{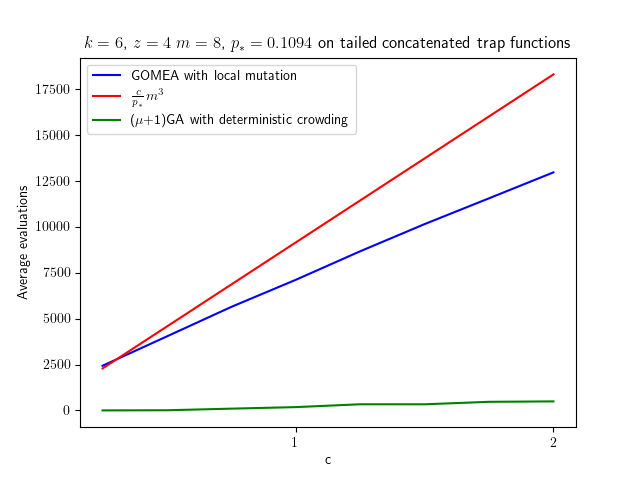}} \\
  \subfloat[]{\includegraphics[width=0.45\linewidth]{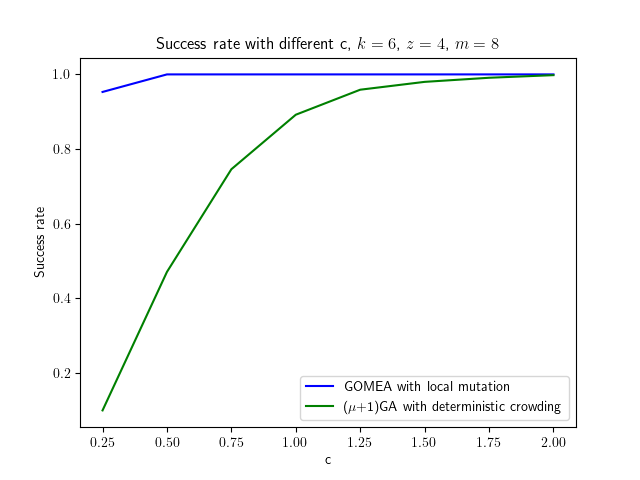}} &
  \subfloat[]{\includegraphics[width=0.45\linewidth]{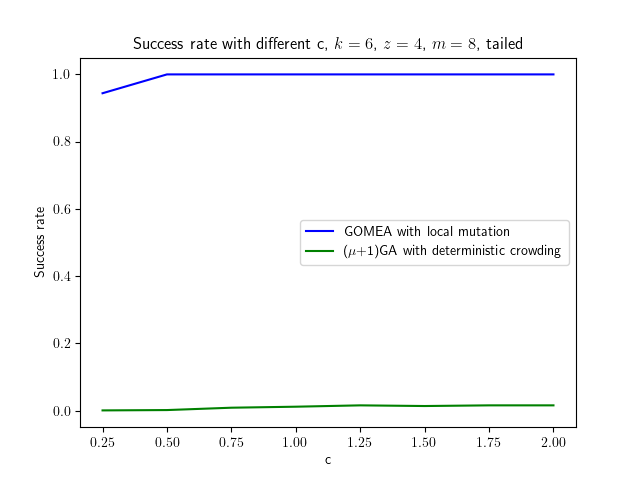}} \\
  \end{tabular}

  \caption{(a)(c)(e) Results of concatenated generalized trap function (b)(d)(f) Results of concatenated tailed trap function}
  \label{fig:gomea_success2}
\end{figure*}

\bibliographystyle{ACM-Reference-Format}
\bibliography{sample-base}

\appendix

\end{document}